\renewcommand{\ALG@name}{Scheme}
\renewcommand{\ALG@beginalgorithmic}{\footnotesize}
\newcommand{\multiline}[1]{%
	\begin{tabularx}{\dimexpr\linewidth-\ALG@thistlm}[t]{@{}X@{}}
		#1
	\end{tabularx}
}
\numberwithin{equation}{section}
\newtheorem{theorem}{Theorem}[section]
\theoremstyle{definition}
\newtheorem{remark}{Remark}
\renewcommand{\@biblabel}[1]{#1\hfill \hspace{-0.2cm}}
\numberwithin{equation}{section}
\begin{document}

\title{Stochastic forest transition model dynamics and parameter estimation via deep learning}

\author{%
  Satoshi Kumabe\affil{1},
  Tianyu Song\affil{2}
  and
  T\^{o}n Vi\d {\^{e}}t T\d{a}\affil{1,2,3,}\corrauth
}

\shortauthors{the Author(s)}

\address{%
  \addr{\affilnum{1}}{Joint Graduate School of Mathematics for Innovation, Kyushu University, 744 Motooka Nishi Ward, Fukuoka 819-0395, Japan}
  \addr{\affilnum{2}}{Graduate School of Bioresource and Bioenvironmental Sciences, Kyushu University, 744 Motooka Nishi Ward, Fukuoka 819-0395, Japan}
  \addr{\affilnum{3}}{Center for Promotion of International Education and Research, Faculty of Agriculture, Kyushu University, 744 Motooka Nishi Ward, Fukuoka 819-0395, Japan}}
\corraddr{Email: tavietton[at]agr.kyushu-u.ac.jp.}

\begin{abstract}
Forest transitions, characterized by dynamic shifts between forest, agricultural, and abandoned lands, are complex phenomena. This study developed a stochastic differential equation model to capture the intricate dynamics of these transitions. We established the existence of global positive solutions for the model and conducted numerical analyses to assess the impact of model parameters on deforestation incentives. To address the challenge of parameter estimation, we proposed a novel deep learning approach that estimates all model parameters from a single sample containing time-series observations of forest and agricultural land proportions. This innovative approach  enables us to understand forest transition dynamics and deforestation trends at any future time.
\end{abstract}

\keywords{forest transition;
stochastic differential equations;
deep learning 
}

\maketitle
\allowdisplaybreaks

\section{Introduction}
Forests are vital ecosystems that support biodiversity, regulate the climate, and provide numerous ecosystem services. However, global forest cover has been undergoing significant changes in recent decades, with the expansion of agriculture and urbanization leading to deforestation and forest degradation in many regions. Conversely, in certain areas, reforestation and forest recovery have been observed, indicating a dynamic process known as forest transition \cite{Meyfroidt}.

Forest transition refers to a change of land-use in a given territory from forest land loss to forest land recovery. This phenomenon has drawn significant attention from researchers, policymakers, and environmentalists due to its implications for environmental conservation, land use dynamics, and sustainable development \cite{Mather,rudel2005}. Understanding the patterns and dynamics of forest transition is crucial for formulating effective policies and strategies to foster sustainable land management and forest preservation.

A number of approaches have been proposed to understand forest transition dynamics, including empirical studies, behavioral models, and conceptual frameworks that highlight socio-economic feedbacks and policy impacts  \cite{Walker2004, Lambin2010}.  Notably, Satake and Rudel \cite{Satake} introduced a deterministic model to explore the individual landowner's incentives for deforestation. The model assumes that individual landowner decisions and landscape-level processes, such as forest regeneration from abandoned land, collectively shape the overall forest transition dynamics. Their model, a system of difference equations, is given by:
\begin{equation}\label{FTM}
	\left\{ \,
	\begin{split}
		&x_{n+1} =  p(x_n)(1-x_n-y_n) - r(x_n)x_n +x_n,  \\
		&y_{n+1} =  r(x_n)x_n - \eta y_n + y_n, 
	\end{split}
	\right.
\end{equation}
where
$x_n, y_n,$ and $1 - x_n- y_n$ represent the proportions of forest land, agricultural land, and abandoned land at time $n$, respectively (other parameters are detailed in Section \ref{Model_desription}).  
By analyzing the equilibrium and stability of this system, Satake and Rudel concluded that the rates of future discounting and forest regrowth are crucial factors influencing the likelihood of forest transition.

While deterministic models provide valuable insights, they often neglect the randomness inherent in ecological and socioeconomic systems. Real-world land-use change is influenced by unpredictable events such as forest fires, pest outbreaks, policy shifts, and market shocks \cite{rudel2020, goni2018}, introducing substantial stochasticity. To capture these effects, stochastic models based on stochastic differential equations (SDEs) have been increasingly used in ecological and epidemiological modeling \cite{Allen2003, Mao, Gao2025,Aditya2024}. These models provide a more realistic framework to study complex systems by incorporating random perturbations into biological or human-driven processes.


In this work, we propose a stochastic extension of the Satake–Rudel model that incorporates environmental and socioeconomic noise into the dynamics of forest transition. The inclusion of stochasticity allows us to examine how random shocks can influence long-term land-use trends and the economic incentives for deforestation under uncertainty.

Moreover, parameter estimation in stochastic systems presents significant challenges. Traditional approaches such as maximum likelihood estimation or Bayesian inference often rely on strong distributional assumptions and require high-resolution time series data — conditions that may be difficult to meet in forest monitoring scenarios. In recent years, machine learning, particularly deep learning, has emerged as a powerful tool for parameter inference in complex dynamical systems. For example, physics-informed neural networks have been applied to infer parameters in partial differential equation models from sparse or noisy data \cite{Raissi2019}. 


Our research objectives are twofold. First, we prove the existence of global positive solutions and investigate the impact of model parameters on the net expected gain from deforestation within the framework of our stochastic model. By identifying numerical thresholds for key control parameters, we aim to delineate conditions under which the net expected gain from deforestation changes sign. This information is crucial as it directly influences landowner decisions: a positive gain incentivizes deforestation for agricultural expansion, while a negative gain promotes forest cover growth.

Second, we develop a novel parameter estimation approach to address the critical challenge of parameter estimation for our stochastic model. 
Traditional methods, such as maximum likelihood estimation, often rely on strong distributional assumptions and require substantial real-world data. This can be problematic in applications like forest land-use change analysis where data collection is often limited. To overcome these challenges, we introduce a novel data-driven approach that integrates deep learning techniques. This approach leverages a hybrid dataset comprising limited real-world observations and a substantial amount of synthetic data generated from our stochastic model. 
Real-world data is obtained by observing the proportions of forest and agricultural land at multiple time points within a given period (e.g., monthly observations for a year). 
Synthetic data is generated by numerically solving the stochastic forest transition model. By effectively combining these datasets, our deep learning-based approach enables accurate estimation of all model parameters. This innovative approach enables us to understand long-term forest transition dynamics and the net expected gain from deforestation at any future times.

The paper is organized as follows. Section \ref{Model_desription} describes the stochastic forest transition model.  Section~\ref{sec:global} establishes the existence of global  positive solutions for the system and identifies an invariant set within the domain $\{(x, y)\in \mathbb R^2 \mid x, y>0, 0<x+y<1\}$. Section \ref{sec:numer} examines the influence of model parameters on the net expected gain of deforestation. In Section \ref{sec:estimation}, we address the  parameter estimation problem by generating a synthetic dataset through parameter sampling and numerical solutions of the system. This dataset is subsequently used to train various machine learning models. Finally, Section \ref{conclusion} provides the conclusion.

\section{Model description}\label{Model_desription}
In this section, we introduce our stochastic differential equation model to describe the dynamics of forest transition. Let $X_t > 0$, $Y_t > 0$, and $Z_t > 0$ denote the areas (in hectares) of forest, agricultural, and abandoned lands, respectively, owned by a landowner at time $t \geq 0$. The corresponding proportions of these land-use types are defined as:
$$x_t=\frac{X_t}{X_t+Y_t+Z_t}> 0, \quad y_t=\frac{Y_t}{X_t+Y_t+Z_t}>0, \quad z_t=\frac{Z_t}{X_t+Y_t+Z_t}>0.$$
By definition, these proportions satisfy the constraint:
$$x_t+y_t+z_t=1.$$
Hence, knowledge of the dynamics of $(x_t, y_t)$ is sufficient to determine $z_t$ through the identity $z_t = 1 - x_t - y_t$.

We assume that the pair $(x_t, y_t)$ evolves according to the following system of stochastic differential equations:
\begin{equation}\label{SFTM}
	\left\{ \,
	\begin{split}
		&dx_t = \lbrack p(x_t)(1-x_t-y_t) - r(x_t)x_t \rbrack dt + \sigma_1 x_t(1-x_t-y_t)dw_t, \\
		&dy_t = \lbrack r(x_t)x_t - \eta y_t\rbrack dt + \sigma_2 y_t(1-x_t-y_t)dw_t,
	\end{split}
	\right.
\end{equation}
with initial value 
$$(x_0, y_0) \in \{(x, y)\in \mathbb R^2 \mid x, y>0, 0<x+y<1\}.$$
Here, the process $\{w_t\}_{t\geq 0}$ is a Brownian motion defined on a filtered complete probability space $(\Omega, \mathcal{F}, {\mathcal F}_{t \geq 0}, \mathbb P)$. The stochastic term $dw_t$ represents environmental or socioeconomic randomness affecting land-use change, with multiplicative noise intensities $\sigma_1 x_t(1-x_t-y_t)$ and $\sigma_2 y_t(1-x_t-y_t)$ for the forest and agricultural land ratios, respectively. The constants $\sigma_1$ and $\sigma_2$ are positive and represent the strength of stochastic perturbations. 

Importantly, this formulation ensures that the noise intensity vanishes when the corresponding land-use category is absent. Specifically, the noise term in the first equation of \eqref{SFTM} becomes zero when $x_t = 0,$ and similarly, the noise in the second equation vanishes when $y_t = 0$.  Since the proportion of abandoned land is given by $z_t = 1 - x_t - y_t$, the total noise intensity affecting it is $(\sigma_1 x_t + \sigma_2 y_t)(1 - x_t - y_t)$, which vanishes when $z_t = 0$. This structure reflects the ecological reality that random fluctuations should not impact land-use types that are no longer present.

Let us now explain other parameters in the model \eqref{SFTM}. 
The function $p(x_t)= \mu + hx_t$ represents the forest recovery rate at time $t$. It is determined by the basic recovery rate $\mu \in (0,1)$ (when all forest land is deforested) and the coefficient of forest recovery $h > 0$. 
As $0 < p(x) < 1$, the parameters $\mu$ and $h$ satisfy $\mu + h < 1$. In addition, the parameter  $\eta \in (0,1)$ denotes the abandonment rate.

In the meantime, $r(x_t)$ denotes the deforestation rate when the extent of forest cover is  $x_t$. It is given~by:
\begin{align*}
	r(x_t) = \frac{1}{1+e^{-\beta G(x_t)}},
\end{align*}
where $\beta > 0$ controls the stochasticity in decision-making, and $G(x_t)$ is the net expected gain from deforestation when the extent of forest cover is  $x_t$. This net expected gain is defined as
$$G(x)=V_A(x)-V_F(x),$$
where $V_A$, $V_F$, and $V_E$ represent the expected discounted utilities of agricultural, forested, and abandoned land, respectively, when the extent of forest cover is $x$. These utility functions are interconnected as follows \cite{Satake}:
\begin{equation}
	\begin{cases}
		V_F=\frac{q(x)} {1- \gamma},\\
		V_A=\alpha + \gamma [(1-\eta)V_A+\eta V_E],\\
		V_E=\gamma \{[1-p(x)]V_E+p(x)V_F\}.
	\end{cases}
\end{equation}
Here, $q(x)$ represents the forest value for ecosystem services when the forest land proportion is $x$. The parameter  $\gamma \in (0,1)$ is the discount factor, and $\alpha > 0$ is the utility of agriculture.

By solving this system of equations, we obtain
\begin{equation*}
	\begin{aligned}
		G(x) &= \frac{\alpha [1-\gamma\{1 - p(x)\}] - [ 1 - \gamma \{ 1 - \eta - p(x)\}]q(x)}{1 - \gamma [2 - \eta - p(x)]  + \gamma^2 (1-\eta)[1 - p(x) ]} \\
		&=\frac{\alpha [1-\gamma\{1 - p(x)\}] - [ 1 - \gamma \{ 1 - \eta - p(x)\}]q(x)}{\{1- \gamma (1 - \eta)\} \{1 - \gamma (1 - p(x))\}}.
	\end{aligned}
\end{equation*} 
The function $G(x)$ is defined for $x > 0$ and crucially influences landowner decisions. A positive $G(x_t)$ at time $t$ incentivizes deforestation for agricultural expansion, while a negative $G(x_t)$ promotes forest cover increase using agricultural or abandoned land.

Following the deterministic case, we incorporate two alternative hypotheses regarding the forest value function  $q(x),$ each reflecting distinct economic and policy perspectives:
\newpage
\begin{itemize}
	\item [(FSH)] {\bf Forest Scarcity Hypothesis}: 
	
	Here, $q(x)$ represents the income from forest product sales (e.g., fuelwood, timber), modeled as  $q(x) = \delta + \lambda(1-x)$, where $ \delta \in (0, \alpha], \delta<1$ is the base return and $\lambda > 0$ captures the increasing value of forest products as forests become scarcer (i.e., as forest cover 
	$x$ declines). This reflects a market-based mechanism where deforestation increases the scarcity—and thus price—of forest goods. The (FSH) implies that policies may need to regulate extraction or create incentives for sustainable harvesting to prevent overexploitation driven by rising short-term profits.
	\item [(ESH)] {\bf Ecosystem Service Hypothesis}:
	
	In this case, $q(x) = \delta + \lambda x$ \, $(\delta \in (0,1), \lambda>0, 0 < \delta + \lambda \leq \alpha$)   reflects government payments or subsidies for ecosystem services, such as carbon sequestration or watershed protection. As forest cover $x$ increases, the value $q(x)$ rises, capturing the idea that intact forests provide greater environmental benefits. This hypothesis underpins conservation-oriented policies, such as payments for ecosystem services, where landowners are incentivized to maintain or restore forest cover for long-term ecological gains.
\end{itemize}
Both hypotheses influence land management decisions differently: (FSH) suggests a reactive valuation based on scarcity, potentially promoting short-term economic exploitation unless counterbalanced by regulation, while (ESH) emphasizes proactive valuation tied to conservation outcomes, aligning economic incentives with environmental stewardship.

To conclude this section, we summarize the constraints imposed on the ten parameters of our model~\eqref{SFTM}: 
\begin{equation*}
	\begin{cases}
		\sigma_1>0, \sigma_2>0, \mu>0, h>0, \beta>0, \alpha>0, \lambda>0,\\
		\mu+h<1, \\
		0<\eta, \gamma, \delta<1,\\
		0<\delta\leq \alpha \quad \text {in the case of  (FSH),}\\
		0<\delta+\lambda\leq  \alpha  \quad  \text{in the case of  (ESH). }\\
	\end{cases}
\end{equation*}

\section{Global solutions} \label{sec:global}
This section establishes the existence and uniqueness of solutions for our model \eqref{SFTM} under both the (FSH) and (ESH) hypotheses. We prove the existence of a unique global positive solution to \eqref{SFTM}. For foundational concepts in stochastic differential equations, refer to \cite{Friedman,Mao,Arnold}.
\begin{theorem} \label{theorem1}
	For any initial condition in  the triangle $\Delta \coloneqq\{(x, y)\in \mathbb R^2 \mid x, y>0, 0<x+y<1\}$, 
	there exists a unique global solution $(x_t,y_t)$ of \eqref{SFTM}. Furthermore,  $(x_t,y_t) \in \Delta $  a.s. for $0<t<\infty$.
\end{theorem}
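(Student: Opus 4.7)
The plan is a classical non-explosion argument built around a Lyapunov function that diverges on $\partial\Delta$. I would first check that the coefficients of \eqref{SFTM} are locally Lipschitz in a neighborhood of $\overline{\Delta}$: everything is polynomial in $(x,y)$ except for $r(x) = 1/(1+e^{-\beta G(x)})$, and $G$ is smooth on $\{x \geq 0\}$ because its denominator $[1-\gamma(1-\eta)][1-\gamma(1-p(x))]$ is bounded away from zero there (using $\gamma, \eta \in (0,1)$ and $p(x) \geq \mu > 0$). Standard SDE theory \cite{Friedman,Mao,Arnold} then produces a unique maximal local strong solution $(x_t, y_t)$ on $[0, \tau_e)$. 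Picking $k_0$ large enough that $(x_0, y_0) \in \{x, y > 1/k_0,\ x+y < 1-1/k_0\}$, define for $k \geq k_0$
\begin{equation*}
\tau_k = \inf\{t \in [0, \tau_e) : x_t \leq \tfrac{1}{k} \text{ or } y_t \leq \tfrac{1}{k} \text{ or } x_t + y_t \geq 1 - \tfrac{1}{k}\}, \qquad \tau_\infty = \lim_{k\to\infty}\tau_k \leq \tau_e.
\end{equation*}
It is enough to show $\tau_\infty = \infty$ a.s., since this yields simultaneously global existence, uniqueness, and invariance of $\Delta$.

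The key construction is
\begin{equation*}
V(x,y) = -\ln x - \ln y - \ln(1-x-y), \qquad (x,y) \in \Delta,
\end{equation*}
which is $C^2$ on $\Delta$ and blows up at every point of $\partial\Delta$. Denoting by $L$ the generator of \eqref{SFTM}, a direct Itô computation gives
\begin{equation*}
\begin{split}
LV = &-\frac{p(x)(1-x-y)}{x} - \frac{r(x)x}{y} - \frac{\eta y}{1-x-y} + r(x) + p(x) + \eta \\
&+ \tfrac{1}{2}\sigma_1^2\bigl(x^2 + (1-x-y)^2\bigr) + \tfrac{1}{2}\sigma_2^2\bigl(y^2 + (1-x-y)^2\bigr) + \sigma_1\sigma_2\,xy.
\end{split}
\end{equation*}
The decisive algebraic point, and the step I expect to be the main obstacle to identify, is that the two contributions of order $r(x)x/(1-x-y)$ produced by $V_x$ acting on the drift $-r(x)x$ of $x$ and by $V_y$ acting on the drift $+r(x)x$ of $y$ cancel exactly; without this cancellation the face $\{x+y=1\}$ of $\Delta$ would be uncontrolled and $V$ would have to be re-weighted. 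The three explicit negative terms above are non-positive on $\Delta$, while the remaining contributions are uniformly bounded there because $0 < x, y, 1-x-y < 1$ and $0 < p(x), r(x) < 1$. Hence $LV \leq K$ on $\Delta$ for some constant $K>0$ depending only on the model parameters.

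A standard Khasminskii-type estimate finishes the proof. Itô's formula and optional stopping yield
\begin{equation*}
\mathbb{E}\, V(x_{t \wedge \tau_k}, y_{t \wedge \tau_k}) \leq V(x_0, y_0) + K t.
\end{equation*}
On the event $\{\tau_k \leq t\}$, at least one of $x_{\tau_k}$, $y_{\tau_k}$, $1 - x_{\tau_k} - y_{\tau_k}$ equals $1/k$, so $V(x_{\tau_k}, y_{\tau_k}) \geq \ln k$, and Markov's inequality gives
\begin{equation*}
\mathbb{P}(\tau_k \leq t) \leq \frac{V(x_0, y_0) + K t}{\ln k} \xrightarrow[k \to \infty]{} 0.
\end{equation*}
Therefore $\tau_\infty \geq t$ a.s.\ for every $t > 0$, hence $\tau_\infty = \infty$ a.s. This forces $\tau_e = \infty$ and keeps $(x_t, y_t)$ in $\Delta$ for all $t>0$ a.s., which is exactly the statement of Theorem~\ref{theorem1}.
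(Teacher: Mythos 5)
Your proof is correct and follows essentially the same route as the paper: the same logarithmic Lyapunov function $V(x,y)=-\ln x-\ln y-\ln(1-x-y)$, the same stopping times $\tau_k$, and the same computation showing $LV\le K$ on $\Delta$ (your cancellation of the $r(x)x/(1-x-y)$ terms is exactly what makes the paper's $V^-$ non-positive and $V^+$ bounded). The only cosmetic difference is that you finish directly with Markov's inequality, $\mathbb{P}(\tau_k\le t)\le (V(x_0,y_0)+Kt)/\ln k\to 0$, whereas the paper phrases the same Khasminskii estimate as a contradiction argument starting from $\mathbb{P}(\tau_\infty<T)>\epsilon$.
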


\begin{proof}
	We only prove the theorem under the hypothesis  (FSH) (the proof for the (ESH) case is analogous). 
	
	Since the coefficients of the system \eqref{SFTM} are $C^\infty$-functions, they are locally Lipschitz continuous. Consequently, there is a unique local solution $(x_t,y_t)$ defined on an interval $[0,\tau)$, where $\tau $ is a stopping time. We know that \cite{Mao,ta2015sustainability}, if $\mathbb{P} \{ \tau <\infty\} >0,$ then $\tau$ is an explosion time on ${\tau <\infty}$, i.e.,  on $\{\tau <\infty\},$
	\begin{align*}
		\lim_{t\rightarrow \tau}(x_t + y_t) = 1, \quad \text{or}\quad  \lim_{t\rightarrow \tau}x_t = 0, \quad \text{or}\quad \lim_{t\rightarrow \tau}y_t = 0.
	\end{align*}
	To prove the theorem, it therefore suffices to show that $\tau = \infty$ a.s.

	Let $k_0$ be a positive integer such that $x_0, y_0 \geq \frac{1}{k_0}$ and $x_0+y_0\leq 1-\frac{1}{k_0}$. For each integer $k\geq k_0$, define a stopping time $\tau_k$ as $\tau_k=\inf \Delta_k$  with the convention $\inf\varnothing =\infty$, where 
	$$\Delta_k= \{t\mid 0\leq t < \tau\mid x_t \, \text{ or }\, y_t < \frac{1}{k}, \text{or}\ x_t + y_t > 1 - \frac{1}{k}\}.$$ 
	Since the sequence $\{\tau_k\}_{k=k_0}^\infty$ is nondecreasing, there exists $\tau_\infty= \lim_{k\rightarrow\infty}\tau_k$. Clearly, $\tau_\infty \leq \tau$ a.s. Hence, to prove that $\tau =\infty$ a.s., it suffices to show that $\tau_\infty =\infty$ a.s. 
	
	Assuming the contrary (i.e., $\mathbb P(\tau_\infty <\infty)>0$), there exist $T>0$ and $0<\epsilon <1$ such that 
	\begin{align*}
		\mathbb P(\{\tau_\infty <T \})>\epsilon.
	\end{align*}
	Then, we consider a positive function $V(x, y)$ defined on the triangle  $\Delta$ by
	\begin{align*}
		V(x, y)=-\log x-\log y-\log (1-x-y).
	\end{align*}
	We have for $(x,y)\in\Delta$,
	$$\frac{\partial V}{\partial x}=-\frac{1}{x} +\frac{1}{1-x-y}, \quad \frac{\partial V}{\partial y}=-\frac{1}{y} +\frac{1}{1-x-y}, $$
	and
	$$\frac{\partial^2 V}{\partial x^2}=\frac{1}{x^2} +\frac{1}{(1-x-y)^2}, \quad \frac{\partial^2 V}{\partial y^2}=\frac{1}{y^2} +\frac{1}{(1-x-y)^2}, 
	\quad \frac{\partial^2 V}{\partial x\partial y}=\frac{1}{(1-x-y)^2}.$$
	Applying the It\^o formula to $V$, we obtain that for $t\in [0,\tau),$
	\begin{align*}
		dV(x_t,y_t) =\, &\Big\{[p(x_t)(1-x_t-y_t) - r(x_t)x_t] \frac{\partial V(x_t, y_t)}{\partial x}+\frac{\sigma_1^2 x_t^2(1-x_t-y_t)^2}{2}\frac{\partial^2 V(x_t, y_t)}{\partial x^2} \\
		&+[r(x_t)x_t - \eta y_t]\frac{\partial V(x_t, y_t)}{\partial y}+\frac{\sigma_2^2 y_t^2(1-x_t-y_t)^2}{2}\frac{\partial^2 V(x_t, y_t)}{\partial y^2} \\
		&+\sigma_1\sigma_2 xy (1-x-y)^2\frac{\partial^2 V}{\partial x\partial y} \Big\} dt \\
		&+ \Big[\sigma_1 x_t(1-x_t-y_t) \frac{\partial V(x_t, y_t)}{\partial x}+\sigma_2 y_t(1-x_t-y_t) \frac{\partial V(x_t, y_t)}{\partial y}\Big]dw_t\\
		=\, &[V^{-}(x_t, y_t)+V^{+}(x_t, y_t)]dt +[\sigma_1(2x_t+y_t-1)+\sigma_2(x_t+2y_t-1)]dw_t,
	\end{align*}
	where
	\begin{align*}
		\begin{split}
			&V^-(x, y)= -\left\{\frac{1}{x}p(x)(1-x-y) + \frac{x}{y}r(x) + \frac{\eta y}{1-x-y} \right \}, \\
			&V^+(x,y) = r(x)+ p(x)+\eta +\frac{1}{2}(1-x-y)^2(\sigma_1^2+\sigma_2^2) + \frac{1}{2}\sigma_1^2x^2 + \frac{1}{2}\sigma_2^2y^2 +\sigma_1\sigma_2 xy. \\
		\end{split}
	\end{align*}
	Taking the expectation of the two sides of the equation yields
	\begin{align*}
		\mathbb{E}V(x_t,y_t) = V(x_0,y_0)+\mathbb{E} \int_0^t V^{-}(x_s,y_s)ds + \mathbb{E} \int_0^t V^+ (x_s,y_s)ds, \quad t\in [0,\tau).
	\end{align*}
	Thus,
	\begin{align*}
		\mathbb{E}V(x_{T\wedge\tau_k},y_{t\wedge\tau_k}) = V(x_0,y_0)+\mathbb{E} \int_0^{T\wedge\tau_k} V^{-}(x_s,y_s)ds + \mathbb{E} \int_0^{T\wedge\tau_k} V^+ (x_s,y_s)ds.
	\end{align*}
	It is easy to verify that $V^-(x,y)$ is non-positive and $V^+(x,y)$ is bounded by a constant $M$ on $\Delta$.
	Consequently,
	\begin{equation*}
		\begin{aligned}
			\mathbb EV(x_{T\wedge\tau_k}, y_{T\wedge\tau_k}) \leq V(x_0, y_0) + M\mathbb E(T\wedge\tau_k).
		\end{aligned}
	\end{equation*}
	
	Note that on the event $\{\tau_\infty <T \}$, $\tau_k <T$ for any $k\geq k_0.$
	In addition, evaluating $V(x_{\tau_k}, y_{\tau_k})$ at time $\tau_k$,  we observe that $(x_{\tau_k}, y_{\tau_k})$ lies on the boundary of  $\Delta_k$. Specifically, either $x_{\tau_k} = \frac{1}{k}$, $y_{\tau_k} = \frac{1}{k},$ or $x_{\tau_k} + y_{\tau_k} = 1-\frac{1}{k}$. Hence, 
	\begin{align*}
		\begin{split}
			V(x_{\tau_k}, y_{\tau_k}) &= -\log (x_{\tau_k}) - \log (y_{\tau_k}) - \log (1-x_{\tau_k} - y_{\tau_k}) \\
			&\geq -\log (\frac{1}{k}) = \log k, \quad k\geq k_0.
		\end{split}
	\end{align*}
	Combining these results, we obtain for $k\geq k_0,$
	\begin{equation*}
		\begin{aligned}
			\infty &> V(x_0, y_0) + MT \\
			&> V(x_0, y_0) + M\mathbb E(T\wedge\tau_k) \\
			&\geq \mathbb EV(x_{T\wedge\tau_k}, y_{T\wedge\tau_k}) \\
			&\geq \mathbb E \lbrack \mathbbm{1}_{\{\tau_\infty<T\}} V(x_{T\wedge \tau_k}, y_{T\wedge \tau_k}) \rbrack \\
			&= \mathbb E \lbrack \mathbbm{1}_{\{\tau_\infty<T\}} V(x_{\tau_k}, y_{\tau_k}) \rbrack \\
			&>\epsilon \log k.
		\end{aligned}
	\end{equation*}
	This leads to a contradiction as $k$ approaches infinity: $\infty > V(x_0, y_0) + MT\geq \lim_{k\to \infty} \epsilon \log k =\infty$. Therefore, $\tau_\infty = \tau = \infty$ a.s.,
	implying that $x_t, y_t >0$ and $x_t + y_t <1$ a.s. for  $0\leq t <\infty.$
\end{proof}

\begin{remark}
	We recall that two hypotheses (FSH) and (ESH) define the linear function $q(x)$.
	Although we proved Theorem \ref{theorem1} under (FSH) and (ESH), Theorem \ref{theorem1} holds for any function $q(x)$ on $\Delta$ valued in $\mathbb{R}$. Indeed, we used the fact that the function $r(x)$, which is defined by using $q(x)$, is positive valued and bounded.
\end{remark}

\section{Impact of parameters on deforestation decisions} \label{sec:numer}
This section delves into the influence of model parameters on the net expected gain of deforestation. After illustrating sample paths to demonstrate Theorem \ref{theorem1}, we identify numerical thresholds for the parameters in model \eqref{SFTM} that determine the sign of expectation of the net expected gain. Crucially, this gain significantly impacts landowner decisions. A positive net gain in expectation incentivizes deforestation for agricultural expansion, while a negative net gain promotes forest conservation.

First, let us illustrate the trajectories of the solution. Figures \ref{figure_solution_FSH} and \ref{figure_solution_(ESH)} present sample paths of the solution to \eqref{SFTM} under the hypotheses (FSH) and (ESH), respectively. The left panels depict the time series of the forest land ratio ($x_t$) and agricultural land ratio ($y_t$), while the right panels illustrate the relationship between $x_t$ and $y_t$. As shown, the solution trajectories remain within the triangle $\{(x, y)\in \mathbb R^2 \mid x, y>0, 0<x+y<1\}$, aligning with the findings of Theorem \ref{theorem1}. These simulations are generated using the Euler-Maruyama method \cite{Kloeden2003} with an initial condition $(x_0, y_0)=(0.2,0.3)$ and parameter values: $\mu = 0.2$, $h = 0.3$, $\eta = 0.7, \beta = 2, \delta = 0.7,\lambda = 1, \gamma = 0.5, \alpha = 2,$ and $ \sigma_1 = \sigma_2 = 1$. To ensure numerical convergence, a small time step of $dt = \frac{1}{999}$ is employed.

\if0
\begin{itemize}
	\item Figure \ref{G_x_FSH} describes the value $G(x)$ when each parameter runs through $0$ to $1$ under (FSH).
	\item Figure \ref{G_x_(ESH)}, which is similar to Figure \ref{G_x_FSH}, corresponds to (ESH).
	\item Figure \ref{G_x_noise} describes $G(x)$ when the intensity of noise $\sigma$. For convention, we assume that $\sigma\coloneqq \sigma_1 = \sigma_2$.
\end{itemize}
\fi

\begin{figure}[H]
	\begin{minipage}{0.6\hsize}
		\includegraphics[width=0.8\linewidth]{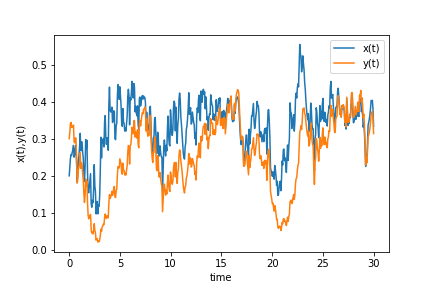}
	\end{minipage}
	\begin{minipage}{0.3\hsize}
		\includegraphics[keepaspectratio, scale=0.4]{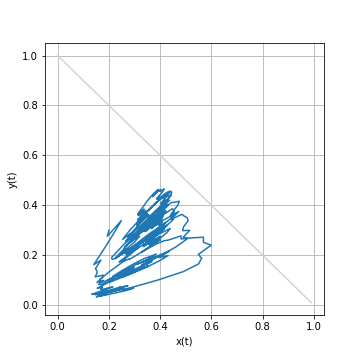}
	\end{minipage}
	\caption{A sample of the ratios of forest land $x_t$ and agricultural land $y_t$ (left) and of $(x_t,y_t)$ (right) with $t\in [0, 30]$ under (FSH).}  
	\label{figure_solution_FSH}
\end{figure}

\begin{figure}[H]
	\begin{minipage}{0.6\hsize}
		\includegraphics[width=0.8\linewidth]{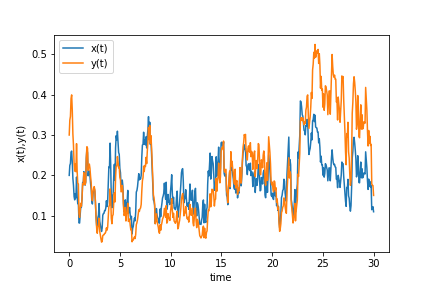}
	\end{minipage}
	\begin{minipage}{0.3\hsize}
		\includegraphics[keepaspectratio, scale=0.4]{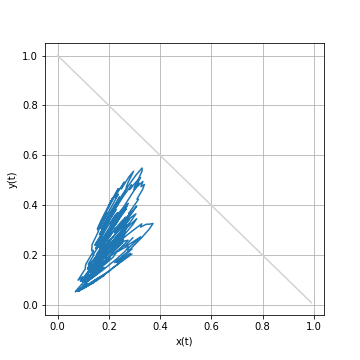}
	\end{minipage}
	\caption{A sample of the ratios of forest land $x_t$ and agricultural land $y_t$ (left) and of $(x_t,y_t)$ (right) with $t\in [0, 30]$ under (ESH).}
	\label{figure_solution_(ESH)}
\end{figure}

In addition, Figure \ref{fig:enter-label} displays the distribution of $(x_T, y_T)$ at time $T = 30$ under both the (FSH) and (ESH) hypotheses. This distribution is generated using 500 sample paths of  solutions whose initial values are taken randomly. As evident from the figure, the support of the distribution is confined to the triangle $\{(x, y)\in \mathbb R^2 \mid x, y>0, 0<x+y<1\}$, corroborating the findings of Theorem \ref{theorem1}.

Second, to determine the influence of model parameters on deforestation decisions, we analyze the sign of expectation of the net expected gain, $\mathbb E G(x_T)$, at time $T = 30$. 
Recall that 
\begin{align*}
	G(x) = \frac{\alpha [1-\gamma\{1 - p(x)\}] - [ 1 - \gamma \{ 1 - \eta - p(x)\}]q(x)}{1 - \gamma [2 - \eta - p(x)]  + \gamma^2 (1-\eta)[1 - p(x) ]}.
\end{align*}
A positive $\mathbb EG(x_T)$ indicates a preference for deforestation to expand agricultural land, while a negative value promotes forest conservation. For these analyses, we use the initial condition $(x_0, y_0) = (0.2, 0.3)$.
\begin{figure}[H]
	\begin{minipage}{0.5\hsize}
		\centering
		\includegraphics[keepaspectratio, scale=0.4]{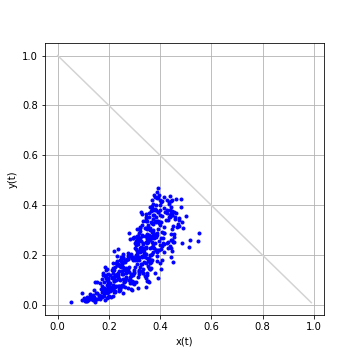}
	\end{minipage}
	\begin{minipage}{0.5\hsize}
		\centering
		\includegraphics[keepaspectratio, scale=0.4]{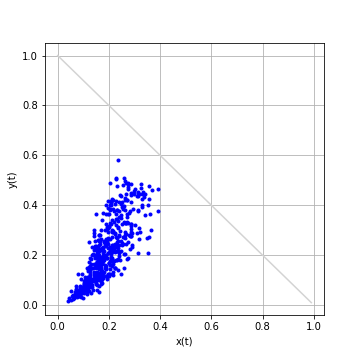}
	\end{minipage}
	\caption{A distribution of $(x_T, y_T)$ at $T = 30$ under (FSH) (left) and (ESH) (right).}
	\label{fig:enter-label}
\end{figure}
Under the (FSH) hypothesis, Figure \ref{G_change_5parameters_FSH} illustrates the relationship between $\mathbb EG(x_T)$ and each of the five parameters ($\eta$, $\delta$, $\lambda$, $\gamma$, and $\alpha$ in this order) while holding the others constant. For each parameter value, $\mathbb EG(x_T)$ is calculated as the average of 100 sample paths at time $T$. The results indicate that, except for $\gamma$, the sign of $\mathbb EG(x_T)$ transitions from positive to negative once as $\eta$, $\delta$, and $\gamma$ increase from 0 to 1, and $\lambda$ from 0 to 2. Conversely, $\mathbb EG(x_T)$ changes from negative to positive as $\alpha$ increases from 0 to~3. Table \ref{tb:sign_G_FSH} summarizes the parameter values at which $\mathbb EG(x_T) = 0$.

\begin{figure}[H]
	\centering
	\includegraphics[keepaspectratio, scale=0.4]{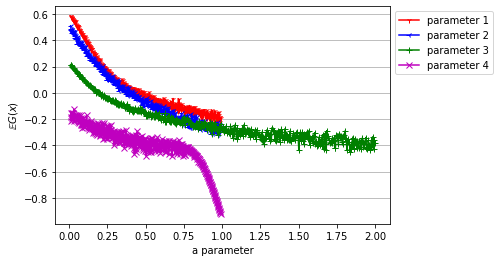}
	\includegraphics[keepaspectratio, scale=0.4]{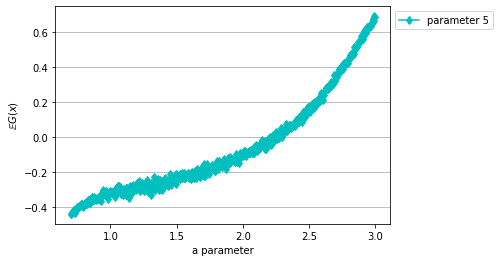}
	\caption{Sensitivity of $\mathbb EG(x_T)$  to individual parameters under (FSH). Each curve represents the variation of $\mathbb EG(x_T)$
		with respect to a specific parameter $i $ (where $i=1,\dots,5).$ The parameters correspond to $(\eta, \delta,\lambda, \gamma, \alpha)$.  Other parameters are fixed at $\mu = 0.2, h =  0.3, \eta = 0.7, \sigma_1 = \sigma_2 = 1, \beta = 2, \delta = 0.7,\lambda = 1, \gamma = 0.5.$}
	\label{G_change_5parameters_FSH}
\end{figure}

\begin{table}[htbp]
	\centering	\caption{The values of parameters at which the sign of $\mathbb EG(x_T)$ changes under (FSH).}
	\begin{tabular}{lllll}
		\hline
		$\eta$ &  $\delta$ & $\lambda$ & $\alpha$ \\ \hline
		0.4381 & 0.3713& 0.1934&2.2026\\ \hline
	\end{tabular}

	\label{tb:sign_G_FSH}
\end{table}

The effect of noise is also considered. Figure \ref{G_x_noise_FSH} illustrates the relationship between $\mathbb EG(x_T)$ and the noise magnitude ($\sigma_1 = \sigma_2$) under the (FSH) hypothesis, while keeping other parameters constant. The figure reveals that $\mathbb EG(x_T)$ remains negative for noise levels between 0 and 1. However, multiple sign changes occur as noise increases from 1 to 3, with approximate thresholds at $\sigma_1 = \sigma_2 = 1.88$ and $2.10$.

\begin{figure}[H]
	\centering
	\includegraphics[keepaspectratio, scale=0.4]{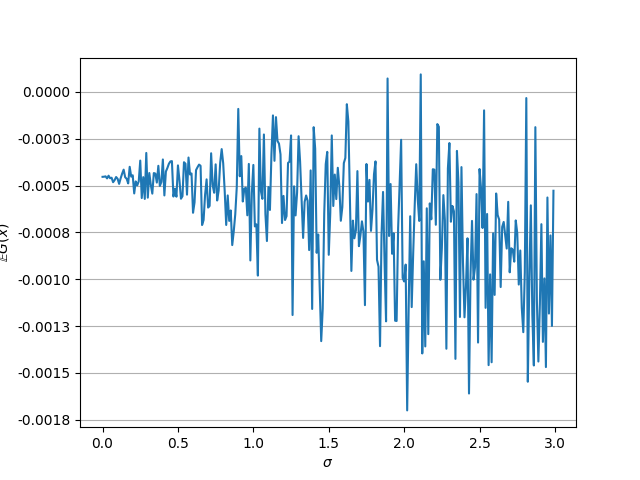}
	\caption{Graph of $\mathbb EG(x_T)$ along  $\sigma = \sigma_1 = \sigma_2$ under (FSH). Other parameters are fixed at $\mu = 0.2, h = 0.3,\eta = 0.7, \beta = 2, \delta = 0.7,\lambda = 1, \gamma = 0.5, $ and $\alpha = 2.$}
	\label{G_x_noise_FSH}
\end{figure}
\begin{figure}[H]
	\centering
	\includegraphics[keepaspectratio, scale=0.4]{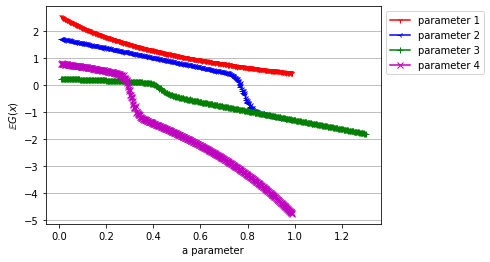}
	\includegraphics[keepaspectratio, scale=0.4]{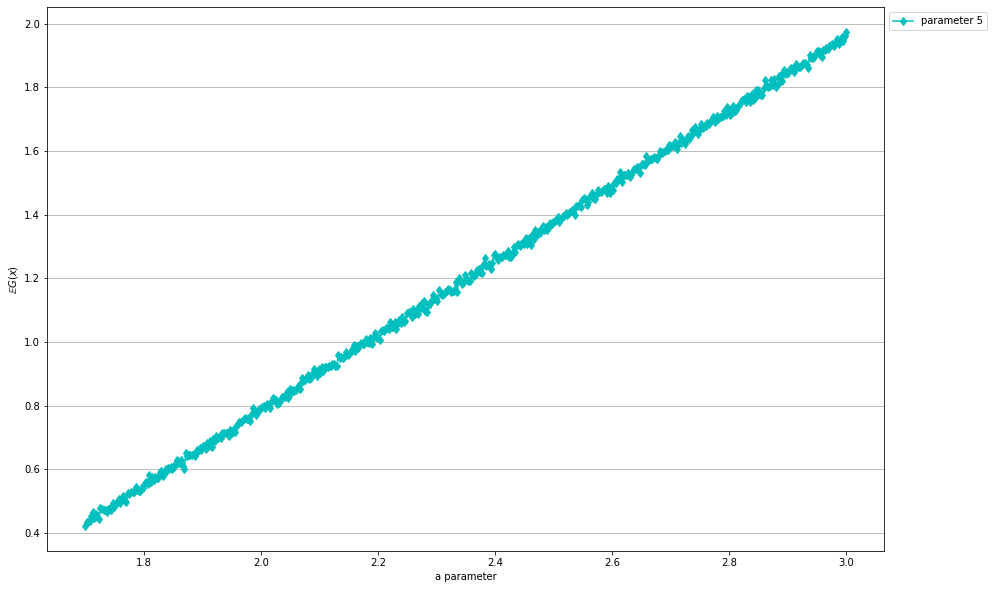}
	\caption{Sensitivity of $\mathbb EG(x_T)$  to individual parameters under (ESH). Each curve represents the variation of $\mathbb EG(x_T)$
		with respect to a specific parameter $i $ (where $i=1,\dots,5).$  The parameters correspond to $(\eta, \delta,\lambda, \gamma, \alpha)$.  Other parameters are fixed at  $\mu = 0.2, h =  0.3, \eta = 0.7, \sigma_1 = \sigma_2 = 1, \beta = 2, \delta = 0.7,\lambda = 1, $ and $\gamma = 0.5.$}
	\label{G_change_5parameters_ESH}
\end{figure}
Similar analyses are conducted under the (ESH) hypothesis. Figure \ref{G_change_5parameters_ESH} shows the relationship between $\mathbb EG(x_T)$ and each of the five parameters while holding others constant. Unlike the (FSH) case, the sign of $\mathbb EG(x_T)$ transitions from positive to negative only for $\delta$, $\lambda$, and $\gamma$, as these parameters increase. Table~\ref{tb:sign_G_(ESH)} summarizes the corresponding threshold values where $\mathbb EG(x_T) = 0$. 

Notably, noise levels between 0 and 3 do not alter the sign of $\mathbb EG(x_T)$ under the (ESH) hypothesis, although $\mathbb EG(x_T)$ exhibits significant fluctuations at higher noise levels (Figure \ref{G_x_noise_ESH}).

\begin{table}[htbp]
	\centering 	\caption{The values of parameters at which the sign of $\mathbb EG(x_T)$ changes under (ESH).}
	\begin{tabular}{lll}
		\hline
		$\delta$ & $\lambda$ & $\gamma$   \\ \hline
		0.7661& 0.4081 & 0.2928 \\ \hline
	\end{tabular}

	\label{tb:sign_G_(ESH)}
\end{table}

\begin{figure}[htbp]
	\centering
	\includegraphics[keepaspectratio, scale=0.6]{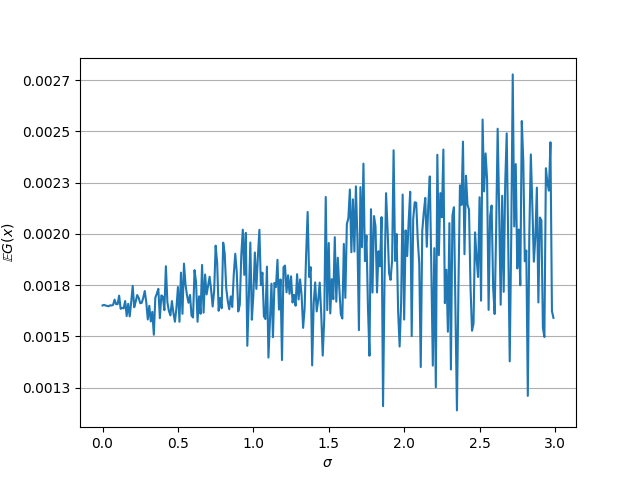}
	\caption{Graph of $\mathbb EG(x_T)$ along $\sigma_1$ and $\sigma_2$ when $\sigma_1 = \sigma_2$ under (ESH). Other parameters are as follows: $\mu = 0.2, h = 0.3,\eta = 0.7, \beta = 2, \delta = 0.7,\lambda = 1, \gamma = 0.5, $ and $\alpha = 2.$}
	\label{G_x_noise_ESH}
\end{figure}

\section{Parameter estimation via deep learning}  \label{sec:estimation}
This section introduces a novel approach to estimating parameters in the stochastic forest transition model  \eqref{SFTM} using machine learning techniques. Traditionally, parameter estimation for stochastic models relies on methods like maximum likelihood, which requires specific distributional assumptions and a sufficient amount of data per parameter. However, in many real-world scenarios, including forest land-use change analysis, obtaining such data can be challenging.

To overcome these limitations, we propose a data-driven approach that leverages the power of deep learning and machine learning. We utilize observed time series data of forest and agricultural land cover, denoted as $(x_t, y_t)$ for time period $t = 1, ..., T$, to estimate the eight model parameters ($\mu, h, \eta, \beta, \delta, \lambda, \gamma, \alpha$) and noise intensities ($\sigma_1, \sigma_2$).

Our methodology involves the following steps:
\begin{itemize}
	\item [(DG)] {\bf Data Generation}: A large synthetic dataset of model parameters ($P, \sigma$) is generated through uniform sampling within plausible ranges. This ensures that the dataset of parameters for training is sufficiently rich to encompass a wide range of possible parameter values.
	\item [(MS)] {\bf Model Simulation}: For each parameter set in the synthetic dataset, the system \eqref{SFTM} coupled with initial value $(x_1,y_1)$ is numerically solved to generate simulated time series data $(A, F)$.
	\item [(MT)] {\bf Model Training}: Deep learning (recurrent neural network (RNN), long short-term memory (LSTM), and shallow convolutional neural network (SCNN)), and machine learning (random forest) models are trained to predict the original parameters ($P, \sigma$) based on the simulated time series data $(A, F)$.
\end{itemize}

By training these models on a vast synthetic dataset, we aim to capture complex relationships between model parameters and the observed land-use dynamics. The optimal model will be selected based on its performance in predicting the original parameters.

Before delving into the specifics, it is essential to provide a brief overview of the deep learning and machine learning models employed in this research \cite{d2l}.

{\bf RNNs:} RNNs are a class of neural networks designed to process sequential data. Unlike traditional feedforward neural networks, RNNs incorporate a recurrent connection, allowing them to maintain a form of "memory" about previously processed elements. This enables RNNs to effectively capture temporal dependencies inherent in time-series data such as forest and agricultural metrics.


{\bf LSTMs:} LSTMs are a specialized type of RNN that address the vanishing gradient problem often encountered in traditional RNNs. LSTMs introduce memory cells and gate mechanisms to regulate the flow of information, enabling them to learn long-term dependencies more effectively. This characteristic makes LSTMs particularly well-suited for capturing the seasonal and trend patterns prevalent in forest and agricultural data.

Given the cyclical nature and temporal dependencies often observed in forest and agricultural datasets, RNNs and LSTMs are well-positioned to extract meaningful patterns and inform accurate predictions.



{\bf SCNNs:} To capture the intricate relationships between forest and agricultural data, we employed a SCNN architecture. Unlike traditional CNNs, which excel in processing grid-like data (e.g., images), our data consisted of time series for forest and agriculture metrics. To adapt CNNs to this structure, we combined the two time series into a single input matrix, forming two channels. Due to the relatively short length of our time series, employing a deep CNN architecture was impractical. Consequently, we opted for a shallow CNN to effectively extract relevant features while maintaining computational efficiency.

Preliminary experiments involving data transformation into trend figures yielded inferior results compared to using raw data. Therefore, the presented model exclusively utilizes raw data for training and evaluation.

{\bf Random Forest:} To provide a comparative analysis with deep learning methods, we incorporated random forest, a well-established ensemble machine learning technique, into our study. Known for its robust performance on classification tasks, random forest offers several advantages. Its ensemble nature, combining multiple decision trees, enhances predictive accuracy while mitigating overfitting. Additionally, random forest generally requires fewer hyperparameter adjustments compared to deep neural networks, simplifying the modeling process. Given its ability to handle large datasets and its interpretability through feature importance analysis, random forest serves as a suitable baseline for evaluating the performance of our proposed deep learning models.

We now elaborate on steps (DG), (MS), and (MT). For step (DG), we uniformly sample $n_1 = 20,000$ sets of model parameters ($\mu, h, \eta, \beta, \delta, \lambda, \gamma, \alpha$) and noise intensities ($\sigma_1, \sigma_2$), forming matrices  of size $n_1 \times 8$ and $n_1 \times 2$:
$$
\begin{bmatrix}
	p_{11} & p_{12} & \dots  & p_{18}\\
	p_{21} & p_{22} & \dots  & p_{28}\\
	\vdots & \vdots & \vdots & \vdots \\
	p_{n_11}& p_{n_12} & \dots  & p_{n_18}
\end{bmatrix}
\text{and }
\begin{bmatrix}
	\sigma_{11} & \sigma_{12} \\
	\sigma_{21} & \sigma_{22} \\
	\vdots & \vdots  \\
	\sigma_{n_11}& \sigma_{n_12} 
\end{bmatrix}
$$
Notice that we assume a relatively small impact of noise on the system, i.e., $\sigma_1, \sigma_2 \in (0,0.1)$. 

Due to the stochastic nature of \eqref{SFTM}, multiple samples $(x_t, y_t)$ can be generated for a given parameter set. To increase the variability of our dataset and enhance model learning, we replicate each parameter set $n_2 = 25$ times. This results in a dataset of size $n = n_1 \times n_2=500,000$ containing parameter sets and corresponding noise intensities, represented by matrices $P$ (size $n \times 8$) and $\sigma$ (size $n \times 2$), respectively:
$$
P=
\begin{bmatrix}
	p_{11} & p_{12} & \dots  & p_{18}\\
	p_{11} & p_{12} & \dots  & p_{18}\\
	\vdots & \vdots & \vdots & \vdots \\
	p_{11} & p_{12} & \dots  & p_{18}\\
	p_{21} & p_{22} & \dots  & p_{28}\\
	p_{21} & p_{22} & \dots  & p_{28}\\
	\vdots & \vdots & \vdots & \vdots \\
	p_{21} & p_{22} & \dots  & p_{28}\\
	\vdots & \vdots & \vdots & \vdots \\
	p_{n_11}& p_{n_12} & \dots  & p_{n_18}\\
	\vdots & \vdots & \vdots & \vdots \\
	p_{n_11}& p_{n_12} & \dots  & p_{n_18}
\end{bmatrix}
\text{and }
\sigma=
\begin{bmatrix}
	\sigma_{11} & \sigma_{12} \\
	\sigma_{11} & \sigma_{12} \\
	\vdots & \vdots  \\
	\sigma_{11} & \sigma_{12} \\
	\sigma_{21} & \sigma_{22} \\
	\sigma_{21} & \sigma_{22} \\
	\vdots & \vdots   \\
	\sigma_{21} & \sigma_{22} \\
	\vdots & \vdots  \\
	\sigma_{n_11}& \sigma_{n_12} \\
	\vdots & \vdots  \\
	\sigma_{n_11}& \sigma_{n_12} 
\end{bmatrix}
$$

For the Model Simulation step (MS), we numerically solve the system \eqref{SFTM} of stochastic differential equations  using the specified initial conditions $(x_1, y_1)$ for each parameter set in $(P, \sigma)$. This process generates a time series of simulated agriculture ($x_t$) and forest ($y_t$) land cover for each parameter set, spanning the time period $t=1, \dots, T$. The resulting simulated data are organized into matrices $A$ and $F$, respectively, where each row represents a time series $x_t$ and $y_t$:
$$
A=
\begin{bmatrix}
	x_{11} & x_{12} & \dots  & x_{1T}\\
	x_{21} & x_{22} & \dots  & x_{2T}\\
	\vdots & \vdots & \vdots & \vdots \\
	x_{n1}& x_{n2} & \dots  & x_{nT}
\end{bmatrix}
\text{ and }
F=
\begin{bmatrix}
	y_{11} & y_{12} & \dots  & y_{1T}\\
	y_{21} & y_{22} & \dots  & y_{2T}\\
	\vdots & \vdots & \vdots & \vdots \\
	y_{n1}& y_{n2} & \dots  & y_{nT}
\end{bmatrix}.
$$

We now use the data sets $(A, F,P,\sigma)$ of features and labels for the Model Training step (MT).
To train and evaluate our deep learning model, we randomly divide the combined dataset of features and labels into training and testing sets using an 80:20 split. This stratified random sampling ensures that the distribution of labels (classes) are maintained in both sets. The resulting training and testing sets are denoted as $(A^1, F^1, P^1, \sigma^1)$ and $(A^2, F^2, P^2, \sigma^2)$, respectively. 

In addition, to ensure consistent feature scales and improve model performance, we standardize the training and testing sets using max-min scaling. This normalization technique rescales features to a specific range (typically -1 to 0) by subtracting the maximal value and dividing by the range. 
This transformation helps prevent features with larger magnitudes from dominating the learning process:
$$
\tilde{x}_{ij} = 
\begin{cases}
	\frac{x_{ij} - \max(A^1) }{\max(A^1) -\min(A^1)}, 
	\text{ if } \max(A^1) \ne  \min(A^1),\\
	0 \text{ otherwise,}
\end{cases} 
$$    
and 
$$
\tilde{y}_{ij} = 
\begin{cases}
	\frac{y_{ij} - \max(F^1) }{\max(F^1) -\min(F^1)}, 
	\text{ if } \max(F^1) \ne  \min(F^1),\\
	0 \text{ otherwise.}
\end{cases} 
$$
Furthermore, in all training models, we employ a mean squared error (MSE) loss function to quantify the difference between predicted and actual values. The MSE is defined as follows:
\begin{align*}
	MSE(\hat{z}, z)= \frac{1}{p}\sum_{i=1}^p\|\hat{z}_i - z_i\|^2,
\end{align*}
where $\hat{z}_i$ represents the predicted value for the $i$-th data point, $z_i$ represents the corresponding true value, and $p$ is the total number of data points.
By minimizing the MSE during training, our models aim to learn a mapping between the input features $(A, F)$ and the target variable $(P, \sigma)$ that would minimize the squared error between predictions and true values.

\subsection {Training the RNN, LSTM, SCNN, and random forest under hypothesis (FSH)} \label{subsec4.1}
\begin{figure}[H]
	\centering
	\includegraphics[keepaspectratio, scale=0.4]{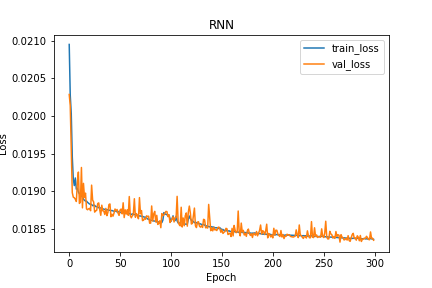}
	\includegraphics[keepaspectratio, scale=0.4]{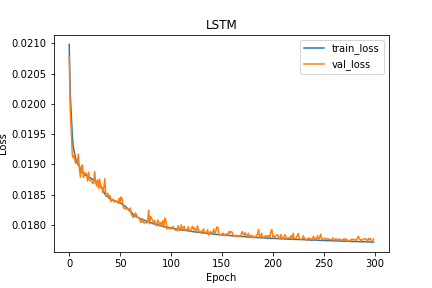}
	\includegraphics[keepaspectratio, scale=0.4]{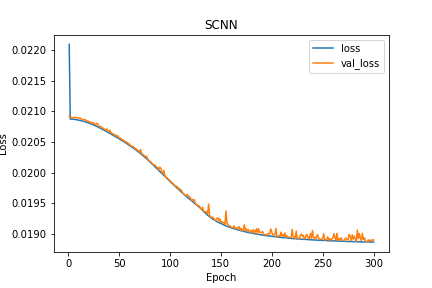}
	\caption{Performance of the RNN, LSTM, and SCNN estimating 8 parameters ($\mu, h, \eta, \beta, \delta, \lambda, \gamma, \alpha$) under (FSH).}
	\label{RNN_FSH}
\end{figure}
This subsection presents the results of training the RNN, LSTM, SCNN, and random forest under hypothesis (FSH). Given the number and types of target variables, we conduct separate training sessions for each of the eight target parameters ($\mu, h, \eta, \beta, \delta, \lambda, \gamma, \alpha$) and for the noise intensities $(\sigma_1,\sigma_2)$. All deep learning models (RNN, LSTM, and SCNN) are trained using 300 epochs with a batch size of 64. Meanwhile, for random forest, we use 1000 decision trees. 

First, we present results for predicting 8 parameters ($\mu, h, \eta, \beta, \delta, \lambda, \gamma, \alpha$).
Figure \ref{RNN_FSH} illustrates the performance of the RNN, LSTM, and SCNN, across the training epochs. As shown in the figure, the loss functions initially exhibit significant fluctuations but gradually converge to more stable values as the training progresses. 

Meanwhile, Table \ref{loss_FSH} presents the loss values for the RNN, LSTM, SCNN, and random forest models on the test set. Together with Figure \ref{RNN_FSH}, it indicates that the SCNN outperforms the other models in estimating the eight parameters under hypothesis (FSH).

\begin{table}[htbp]
	\centering 	\caption{Test loss of the RNN, LSTM, SCNN, and random forest when estimating 8 parameters ($\mu, h, \eta, \beta, \delta, \lambda, \gamma, \alpha$) under (FSH).}
	\begin{tabular}{ll}
		\hline
		Model & Test loss \\
		\hline
		RNN & 2.670648 \\ 
		LSTM &1.360166 \\ 
		SCNN &0.018867 \\  Random Forest & 0.029633
		\\ \hline
	\end{tabular}

	\label{loss_FSH}
\end{table}
\begin{figure}[H]
	\centering
	\includegraphics[keepaspectratio, scale=0.4]{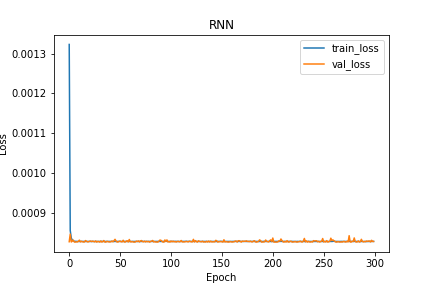}
	\includegraphics[keepaspectratio, scale=0.4]{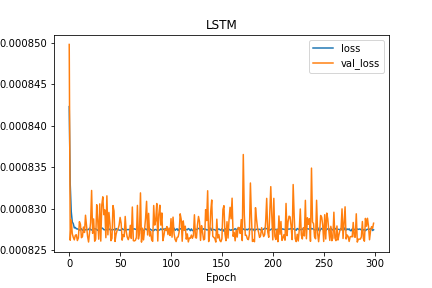}
	\includegraphics[keepaspectratio, scale=0.4]{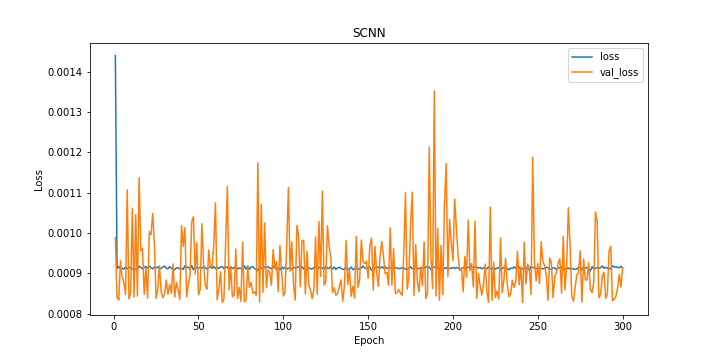}
	\caption{Performance of the RNN, LSTM, and SCNN estimating 2 parameters ($\sigma_1,  \sigma_2$) under (FSH).}
	\label{RNN_noise_FSH}
\end{figure}
Second, we present similar results for predicting two noise intensities   ($\sigma_1,  \sigma_2$) in Figure \ref{RNN_noise_FSH} and Table \ref{loss_noise_FSH}. Figure \ref{RNN_noise_FSH} shows that the training and validation losses exhibit minimal fluctuations after an initial few epochs. Meanwhile, Table \ref{loss_noise_FSH} demonstrates that the RNN, LSTM, and SCNN exhibit comparable performance in estimating noise intensities, surpassing random forest. Notably, the loss values for noise intensity estimation are generally lower than those observed for estimating the eight parameters, even when using the same models. This can be attributed to the assumption of a relatively small noise impact on the system, leading to a higher coverage level of the training set for noise parameters. While increasing the dataset size for parameter estimation could potentially reduce loss values, it would also incur significantly higher computational costs.

\begin{table}[H]
	\centering 	\caption{Test loss of the RNN, LSTM, SCNN, and random forest when estimating 2 parameters ($\sigma_1,  \sigma_2$) under (FSH).}
	\begin{tabular}{ll}
		\hline
		Model & Test loss \\
		\hline
		RNN & 0.000827 \\ 
		LSTM &0.000829 \\ 
		SCNN &0.000912 \\  Random Forest & 0.002403
		\\ \hline
	\end{tabular}

	\label{loss_noise_FSH}
\end{table}

\subsection {Training the RNN, LSTM, SCNN, and random forest under hypothesis (ESH)}
\label{subsec4.2}
This subsection replicates the experiments conducted in Subsection \ref{subsec4.1}, but now under the context of hypothesis (ESH). We maintain the same experimental setup regarding the number of epochs, batch size, and decision trees as used previously.

For the task of predicting eight parameters  ($\mu, h, \eta, \beta, \delta, \lambda, \gamma, \alpha$), 
Figure \ref{RNN_ESH} depicts the performance of the RNN, LSTM, and SCNN,  across the training epochs. In the figure, the loss function exhibits significant fluctuations during the initial epochs but gradually stabilizes as the training progresses.

\begin{figure}[H]
	\centering
	\includegraphics[keepaspectratio, scale=0.4]{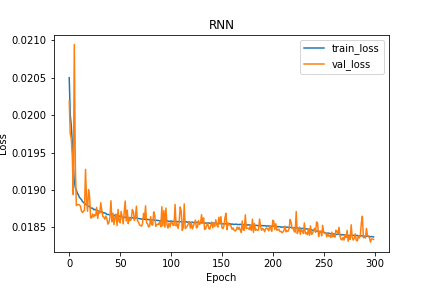}
	\includegraphics[keepaspectratio, scale=0.4]{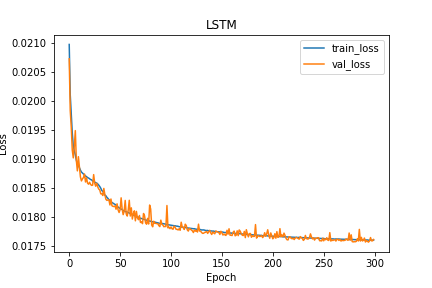}
	\includegraphics[keepaspectratio, scale=0.4]{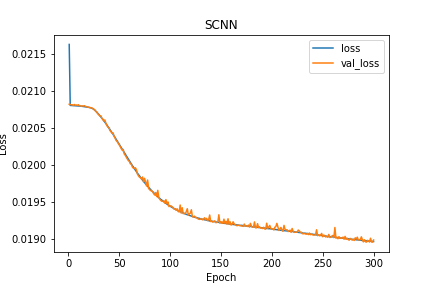}
	\caption{Performance of the RNN, LSTM, and SCNN estimating 8 parameters ($\mu, h, \eta, \beta, \delta, \lambda, \gamma, \alpha$) under (ESH).}
	\label{RNN_ESH}
\end{figure}

Table \ref{loss_ESH} presents the loss values of the RNN, LSTM, SCNN, and random forest models evaluated on the test set. Consistent with the findings in Subsection \ref{subsec4.1}, the SCNN consistently outperforms the other models in estimating all eight parameters.

\begin{table}[htbp]
	\centering 	\caption{Test loss of the RNN, LSTM, SCNN, and random forest when estimating 8 parameters ($\mu, h, \eta, \beta, \delta, \lambda, \gamma, \alpha$) under (ESH).}
	\begin{tabular}{ll}
		\hline
		Model & Test loss \\
		\hline
		RNN & 2.770403\\
		LSTM &1.292252 \\ 
		SCNN &0.018999 \\  Random Forest & 0.026122
		\\ \hline
	\end{tabular}

	\label{loss_ESH}
\end{table}

For predicting noise intensities  ($\sigma_1, \sigma_2$), Figure \ref{RNN_noise_ESH} illustrates the training dynamics of  the RNN, LSTM, and SCNN,  across the training epochs. The loss functions exhibit minimal fluctuations during training. Table \ref{loss_noise_ESH} presents the corresponding loss values on the test set for the RNN, LSTM, SCNN, and random forest. Similar to the findings under hypothesis (FSH), the RNN, LSTM, and SCNN demonstrate comparable performance in estimating noise intensities, outperforming random forest. Notably, the loss values for noise intensity estimation are generally lower than those observed for estimating the eight parameters, even when using the same models. 

\begin{figure}[H]
	\centering
	\includegraphics[keepaspectratio, scale=0.4]{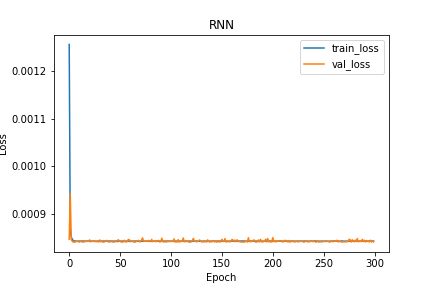}
	\includegraphics[keepaspectratio, scale=0.4]{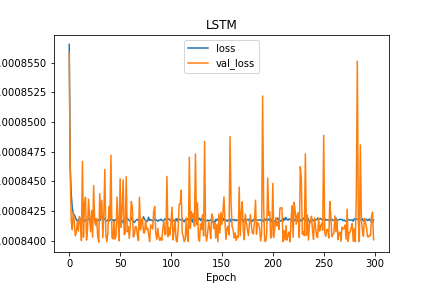}
	\includegraphics[keepaspectratio, scale=0.4]{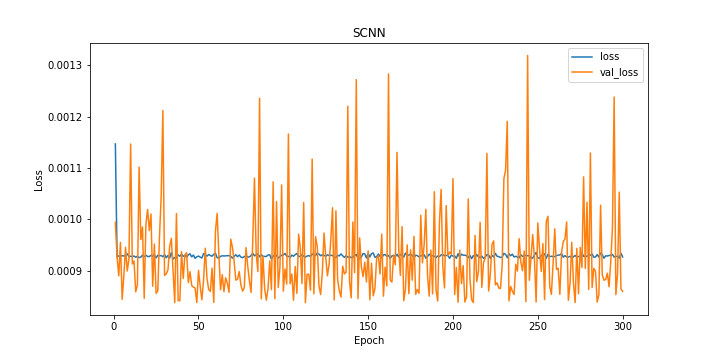}
	\caption{Performance of the RNN, LSTM, and SCNN estimating noise intensities ($\sigma_1,  \sigma_2$) under (ESH).}
	\label{RNN_noise_ESH}
\end{figure}

Based on the aforementioned findings, we conclude that the SCNN emerges as the most effective deep learning model for our task. The superior performance of the SCNN architecture observed in both eight-parameter and noise intensity estimation, under both (FSH) and (ESH), can be attributed to its ability to preserve the spatial structure of agriculture and forest data. Unlike the RNN and LSTM, which flatten the data into vectors, the SCNN maintains the spatial relationships between agriculture and forest components by attaching them to two channels within a tensor. This enables the application of 2D convolution, leading to improved feature extraction and overall model performance.
\begin{table}[H]
	\centering 	\caption{Test loss of  the RNN, LSTM, SCNN, and random forest when estimating  ($\sigma_1,  \sigma_2$) under (ESH).}
	\begin{tabular}{ll}
		\hline
		Model & Test loss \\
		\hline
		RNN & 0.000839 \\
		LSTM &0.000837 \\
		SCNN &0.000860 \\ Random Forest & 0.002260
		\\ \hline
	\end{tabular}
	
	\label{loss_noise_ESH}
\end{table}
\section{Conclusions} \label{conclusion}
This paper introduces a stochastic differential equation model to capture the dynamics of forest transition, extending existing deterministic discrete models. Through theoretical analysis, we establish the existence and uniqueness of global positive solutions within a biologically meaningful domain. Numerical simulations further illustrate the system's behavior and reveal how key parameters influence land-use decisions, particularly the trade-offs between deforestation and forest regeneration.

A central contribution of this work is the development of a novel deep learning-based parameter estimation framework that overcomes common limitations of traditional statistical approaches, such as strong distributional assumptions and the need for dense time-series data. By leveraging a synthetic data, we accurately recover all parameters of the model.

Our experiments demonstrate that the SCNN outperforms alternative methods (RNN, LSTM, and random forest) in parameter estimation. Under both (FSH) and  (ESH), the SCNN achieves the errors: 0.0189 and 0.0009 for the eight model parameters ($\mu, h, \eta, \beta, \delta, \lambda, \gamma, \alpha$)  and the two noise intensities $(\sigma_1, \sigma_2)$ under (FSH), and 0.0190 and 0.0009 under (ESH), respectively.

These findings contribute to a more realistic representation of forest transition processes and provide a practical framework for supporting evidence-based land management and policy development. Future research will explore the model's continuity dependence on parameters and investigate the existence of an invariant measure for the stochastic system.

\if0
\section{Introduction}

Please use the AIMS template to prepare your manuscript, 
before you submit to our journal. 
Please read carefully the instructions for authors at
http://www.aimspress.com/mbe/news/solo-detail/instructionsforauthors.

The page size is A4 ($8.5 \times 11.0$ in or $21.59 \times 27.94$ cm). The margin is set to moderate. Font for all text, except symbols and formulae, should use Times New Roman. 

Article title is bold with a font size 16 and aligned to the left. Authors are bold with a font size 12. Using Times New Roman size 12 for the main text, including figure captions and table titles. However, text size for contents in a table can be chosen from 8 to 12 to fit the table within the width of the page.

\section{Materials and methods}
\subsection{Subheading}

\subsubsection{Sub-subheading}
The heading levels should not be more than 4 levels. 
The font of heading and subheadings should be 12 point 
normal Times New Roman. The first letter of headings 
and subheadings should be capitalized.

\section{Results}
The body text is in 12 point normal Times New Roman, 
the line space is at least 15 point.

\subsection{Figure/image and caption}
 Leave a space of one line before and after a figure or an image, i.e., one line between the main text and the top of the figure or image, and one line between the bottom of the figure or image and the caption. The caption has a default space of 12 $pt$ after it so the main text can continue below the caption. If no text follows the figure caption, do not leave any space between the caption and the next headline (the headline has a default space). 
 
\begin{figure}[H]
\begin{center}
\caption{Legend of the figure.}
\label{Fig1}
\end{center}
\end{figure}

 Figures should be numbered consecutively in the text. A figure can be referred explicitly in the text as Figure~\ref{Fig1}.
 
 Use black and white graphic for line drawings. All lines should be at least 0.1 mm (0.3 $pt$) wide. Line drawings or scanned line drawings should have a resolution at least 600 dpi. Images with a large volume should be properly compressed within 1 MB each. Scanned images should be properly edited to balance the volume size and clarity of the image. 
 
 Figure captions begin with \textbf{Figure x.} in bold, followed by the text. Do not add a full stop at the end of the caption. 

\subsection{Tables and table titles}

Table title should be placed above the table, numbered consecutively, and referred in the text like Table~\ref{table} or (Table~\ref{table}). There is no space between the table and table title. Leave a space of one line above the table title and one line below the bottom of the table. If a new first-level headline follows the table immediately, do not leave the space below the table.

The font of the table title is size 12 but the font of the table contents can be sizes 8 to 12 depending on the best fit for the table. Table title begins with \textbf{Table x}. in bold, followed by the text. 

\begin{table}[H]
\begin{center}
\caption{Caption of the table.}
\begin{tabular}{ccc} \hline
 & & \\\hline
 & & \\
 & & \\
 & & \\\hline
(Three-line table is preferred.)
\end{tabular}
	\label{table}
\end{center}
\end{table}

\subsection{Mathematical formulae}
Mathematical formulae should be numbered consecutively by placing the number in parentheses like (1), one TAB space away from the right line. Mathematical formulae should be typed using MS Word Equation or better using \emph{MathType}. Use size 10--12 for the main characters and size 6--8 for subscripts or superscripts. Leave a space of 6 $pt$ between two formula in separate lines. Some examples are shown below. 

\begin{equation}
\lim_{x \mapsto a^+ }f\left ( x \right )= \pm \infty \qquad \text{or} \qquad \lim_{x \mapsto a^- }f\left ( x \right )= \pm \infty
\label{equation1}
\end{equation}

\begin{equation}
\lim_{x \mapsto \infty }f\left ( x \right )=  a \qquad \text{or} \qquad \lim_{x \mapsto -\infty}f\left ( x \right )=  b
\label{equation2}
\end{equation}

To cite a formula in the text, use the number like Equation~\ref{equation1}, or Eq~\ref{equation2}, or alike.

\section{Reference style, citation, and cross-reference}
Journal uses numbered style for referencing. Cite references in the text by placing number(s) in square brackets by the end of a sentence~\cite{A1,A2}, or immediately after the Author’s name, for example, Jackson~\cite{A3} outlined that …. If an article has two authors, list both surnames, e.g., Wang and Guo~\cite{A4} reported …. If an article has more than six authors, use the first author’s surname and et al., e.g., Chen et al.~\cite{A9} studied ….

Three or more consecutive references can be placed in the same square brackets by joining the start number and the end number by a hyphen like~\cite{A5,A6,A7,A8}. References with separate numbers can be placed in the same square brackets with individual numbers separated by comms like~\cite{A2,A6,A9}. We recommend using software such as EndNote, RefWork or Mendeley to manage references. Otherwise, you have to manually edit the references according to different types of publications. Follow the examples shown in the list of references below for different types of publications~\cite{A10,A11,A12}.

We recommend all references are cross-referenced between their appearances in the text and in the Reference list. AIMS production requires that each reference has DOI (if there is one).
\fi






\section*{Use of AI tools declaration}
The authors declare they have not used Artificial Intelligence (AI) tools in the creation of this article.

\section*{Acknowledgments}

The work of the first and last authors was supported by the WISE program (MEXT) at Kyushu University.

\section*{Conflict of interest}

{The authors declare that they have no conflict of interest.}


\end{document}